\def\eqref#1{equation~\ref{#1}}
\def\1{\bm{1}}
\DeclareMathAlphabet{\mathsfit}{\encodingdefault}{\sfdefault}{m}{sl}
\SetMathAlphabet{\mathsfit}{bold}{\encodingdefault}{\sfdefault}{bx}{n}
\newcommand{\N}{\mathbb{N}}
\newcommand{\B}{\mathbb{B}}
\DeclareMathOperator{\implArrow}{\Rightarrow}
\newcommand{\causes}[2]{$\text{#1}\rightarrow\text{#2}$}
\newcommand{\causesm}[2]{$#1\rightarrow#2$}
\newcommand{\causesmath}[2]{#1\rightarrow#2}
\newcommand{\ditto}{\texttt{"}}
\DeclareMathOperator{\CSCM}{\mathcal{M}}
\DeclareMathOperator{\CVarDomains}{\bm{\mathcal{X}}}
\DeclareMathOperator{\CVarDomain}{\mathcal{X}}
\DeclareMathOperator{\CVars}{\mathbf{X}}
\DeclareMathOperator{\TypeEncoder}{\tau}
\DeclareMathOperator{\CVarEndo}{\mathbf{V}}
\DeclareMathOperator{\CVarExo}{\mathbf{U}}
\DeclareMathOperator{\CStrucEqs}{\mathbf{F}}
\DeclareMathOperator{\CVarTypeDom}{\mathcal{T}}
\DeclareMathOperator{\BoundDeviation}{d}
\newcommand{\cellA}{\cellcolor{Green!45}}
\newcommand{\cellB}{\cellcolor{Orange!25}}
\DeclareMathOperator{\StateSpace}{\mathcal{S}}
\DeclareMathOperator{\CStateTransition}{\sigma}
\DeclareMathOperator{\CMedProcess}{\mathcal{E}}
\DeclareMathOperator{\StartingCond}{s_0}
\DeclareMathOperator{\MetaCausalState}{T}
\DeclareMathOperator{\EdgeZeroType}{\mathit{0}}
\DeclareMathOperator{\MetaCausalModel}{\mathcal{A}}
\DeclareMathOperator{\MetaCF}{\mathcal{F}}
\DeclareMathOperator{\IdFunc}{\mathcal{I}}
\DeclareMathOperator{\Policy}{\bm{\pi}}
\newtheorem{theorem}{Theorem}[section]
\newtheorem{definition}{Definition}
\title{Systems with Switching Causal Relations:\\ A Meta-Causal Perspective}
\author{Moritz Willig${}^{1,*}$, Tim Nelson Tobiasch${}^{1}$, Florian Peter Busch${}^{1,2}$, Jonas Seng${}^{1}$,\\
  \textbf{Devendra Singh Dhami${}^{3}$, Kristian Kersting${}^{1,2,4,5}$}
  \vspace{3mm}
  \\
  ${}^1$\textnormal{Department of Computer Science, Technical University of Darmstadt, Germany}\\
  ${}^2$Hessian Center for AI (hessian.AI), Germany \\
  ${}^3$Dept. of Mathematics and Computer Science, Eindhoven University of Technology, Netherlands\\
  ${}^4$Centre for Cognitive Science, Technical University of Darmstadt, Germany\\
  ${}^5$German Research Center for AI (DFKI), Germany \\
  ${}^*$ \texttt{moritz.willig@cs.tu-darmstadt.de}
}
\begin{document}

\maketitle

\begin{abstract}
Most work on causality in machine learning assumes that causal relationships are driven by a constant underlying process. However, the flexibility of agents' actions or tipping points in the environmental process can change the qualitative dynamics of the system. As a result, new causal relationships may emerge, while existing ones change or disappear, resulting in an altered causal graph. To analyze these qualitative changes on the causal graph, we propose the concept of \textit{meta-causal states}, which groups classical causal models into clusters based on equivalent qualitative behavior and consolidates specific mechanism parameterizations. We demonstrate how meta-causal states can be inferred from observed agent behavior, and discuss potential methods for disentangling these states from unlabeled data. Finally, we direct our analysis towards the application of a dynamical system, showing that meta-causal states can also emerge from inherent system dynamics, and thus constitute more than a context-dependent framework in which mechanisms emerge only as a result of external factors.
\end{abstract}

\section{Introduction}

Structural Causal Models (SCM) have become the de facto formalism in causality by representing causal relations as structural equation models (\citet{pearl2009causality,spirtes2000causation}). While SCM are most useful for representing the intricate mechanistic details of systems, it can be challenging to derive the general qualitative behavior that emerges from the interplay of individual equations. Talking about the general \textit{type} of relation emitted by particular mechanisms generalizes above the narrow computational view and has the ability to inspect causal systems from a more general perspective.

In addition to that, causal graphs can be subject to change whenever novel mechanisms emerge or vanish within a system. Prominently, agents can `break' the natural unfolding of systems dynamics by forecasting system behavior and preemptively intervening in the course of events. As inherent parts of the environment, agents commonly establish or suppress the emergence of causal connections \citep{zhang2017transfer,lee2018structural,dasgupta2019causal}.

Consider the scenario shown in Figure~\ref{fig:refFrame} (left), where an agent $A$ (with position $A_X$) follows an agent $B$ (with position $B_X$) according to its internal policy $A_{\Policy}$. We are interested in answering the question `What is the cause of the current position of agent A?'. In general, the observed system can be formalized as follows: $A_X := f_A(B_X,U_A)$ and $B_X := f_B(U_B)$. Note that, from a classical causal perspective, we observe \causesm{B}{A}, since $A$ self-conditions itself to follow $B$, by instantiating the equation $f_A$ via its policy and thus becomes \textit{dependent} on $B$. Classical causal considerations, which only consider how relations between variables are constructed, cannot give the correct answer. Only when we take a meta-causal stance and think about \textit{how equation $f_A$ came to be} --how meta-causal states induce qualitative changes in behavior--, we can give a sufficient answer to this question.

Causal models do not exist in isolation, but emerge from the environmental dynamics of an underlying mediation process~\citep{hofmann2020beyond,arnellos2006dynamic}. Changes in causal relations due to intervention or environmental change are often assumed to be helpful conditions for identifying causal structures~\citep{pearl2009causality,peters2016causal,zhang2017transfer,dasgupta2019causal,gerstenberg2024counterfactual}. These operations often work on the individual causal graphs, but never reason about the emerging meta-causal structure that governs the transitions between the different SCM. As a first formalization of metacausal models in this area, we describe how qualitative changes in the causal graphs can be summarized by metacausal models.

\begin{figure}
    \centering
    \hspace{3mm}
    \includegraphics[width=0.35\linewidth,valign=c]{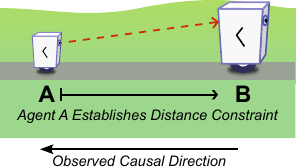}
    \hspace{6mm}
    \vrule
    \hspace{3mm}
    \includegraphics[width=0.45\linewidth,valign=c]{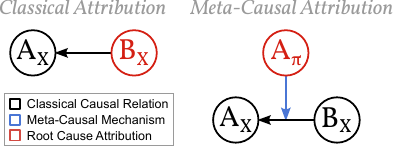}
    
    \caption{\textbf{Meta-Causality Identifies the Policy as a Meta Root Cause.} Agent $A$ intends to maintain its distance from agent $B$ by conditioning its position $A_X$ on the position $B_X$, which establishes a control mechanism, $A_X := f(B_X)$. In standard causal inference, we would infer \causesm{B_X}{A_X} and, therefore, $B$ to be the root cause. Taking a meta-causal perspective reveals however, that $A_{\Policy}$ establishes the edge \causesm{B_X}{A_X} in the first place (\causes{$A_{\Policy}$}{(\causesm{B_X}{A_X})}) such that $A_{\Policy}$ is considered the root cause on the meta-level. \textit{(Best Viewed in Color)}}
    \label{fig:refFrame}
\end{figure}

\textbf{Contributions and Structure.}
The contributions of this work are as follows:

\begin{itemize}
  \item To the best of our knowledge, we are the first to formally introduce typing mechanisms that generalize edges in causal graphs, and the first to provide a formalization of meta-causal models (MCM) that are capable of capturing the switching type dynamics of causal mechanisms.
  \item We present an approach to discover the number of meta-causal states in the bivariate case.
  \item We demonstrate that meta-causal models can be more powerful than classical SCM when it comes to expressing qualitative differences within certain system dynamics.
  \item We show how meta-causal analysis can lead to a different root cause attribution than classical causal inference, and is furthermore able to identify causes of mechanistic changes even when no actual change in effect can be observed.
\end{itemize}

We proceed as follows: In Section~\ref{sec:background} we describe the necessary basics of Pearlian causality, mediation processes, and typing assumptions. In Sec.~\ref{sec:metaCausality} we introduce meta-causality by first formalizing meta-causal frames, and the role of types as a generalization to a binary edge representations. Finally we define meta-causal models that are able to capture the qualitative dynamics in SCM behavior. In Sec.~\ref{sec:applications}, we showcase several examples of meta-causal applications. Finally, we discuss connections to related work in Sec.~\ref{sec:relatedWork} and conclude our findings in Sec.~\ref{sec:conclusion}.

\section{Background}
\label{sec:background}
Providing a higher-level perspective on meta-causality touches on a number of existing works that leverage meta-causal ideas, even if not explicitly stated. We will highlight relations of these works in the `Related Work' of section~\ref{sec:relatedWork}. Here, we continue to provide the necessary concepts on causality, mediation processes and typing, needed for the definitions in our paper:

\textbf{Causal Models.} A common formalization of causality is provided via Structural Causal Models (SCM; \citep{spirtes2000causation,pearl2009causality}). An SCM is defined as a tuple $\CSCM := (\CVarExo, \CVarEndo, \CStrucEqs, P_{\CVarExo})$, where $\CVarExo$ is the set of exogenous variables, $\CVarEndo$ is the set of endogenous variables, $\CStrucEqs$ is the set of structural equations determining endogenous variables, and $P_{\CVarExo}$ is the distribution over exogenous variables $\CVarExo$. Every endogenous variable $V_i \in \CVarEndo$ is determined via a structural equation $v_i := f_i(\text{pa}(v_i))$ that takes in a set of parent values $\text{pa}(v_i)$, consisting of endogenous and exogenous variables, and outputs the value of $v_i$. The set of all variables is denoted by $\CVars = \CVarExo \cup \CVarEndo$ with values $\mathbf{x} \in \CVarDomains$ and $N = |\CVars|$.
Every SCM $\CSCM$ entails a graph $\mathcal{G}^{\CSCM}$ that can be constructed by adding edges $(X_i,X_j) \in \CVars \times \CVars$ for each variable $X_j \in \CVars$ and its parents $X_i \in \text{Pa}(X_j)$. This can be expressed as an adjacency matrix $A \in \B^{N \times N}$ where $A_{ij} := 1$ if $(i,j) \in \mathcal{G}$ and $A_{ij} := 0$ otherwise.

\textbf{Mediation Processes.} Causal effects are embedded in an environment that governs the dynamics and mediates between different causes and effects. To reason about the existence of causal relations, we need to consider this process-embedding environment. We define a \textit{mediation process} $\CMedProcess = (\mathcal{S},\CStateTransition)$, adapted from Markov Decision Processes (\citet{bellman1957markovian}) for our setting.
Here, $\mathcal{S}$ is the state space of the environment and $\CStateTransition: \mathcal{S} \rightarrow \mathcal{S}$ is the (possibly non-deterministic) transition function that takes the current state and outputs the next one. If we also have an initial state  $\StartingCond \in \mathcal{S}$, we call $(\CMedProcess, \StartingCond)$ an \textit{initialized mediation process}.
As we are concerned with the general mediation process, we omit the common notion of a reward function $r$. Furthermore, we omit an explicit action space $\mathcal{A}$ and agents' policy $\Policy$ and model actions directly as part of the transition function $\CStateTransition$. In accordance to considerations of \citet{cohen2022towards}, this eases treatment of environment processes and agent actions as now both are defined on the same domain. The emergence of SCM from mediation processes can be studied under a measure-theoretic theory, as considered in \citet{park2023measure}. Similarly, \citet{janzing2022phenomenological} discuss the role of elementary actions towards the constitution of causal variables.

\textbf{Typing.} In the following section, we will make use of an identification function $\IdFunc$ to determine the presence or absence of edges between any two variables. In particular, one can make use of different identification functions to identify different \textit{types} of edges. Previous work on typing causality exists \citep{brouillard2022typing}, but primarily considers the types of variables that are causally related, rather than the type of structural relation itself. Other works in the field of cognitive science consider the perception of different types of mechanistic relations (e.g., `causing', `enabling', `preventing', `despite') based on the role that different objects play in physical scenes \citep{chockler2004responsibility, wolff2007representing,sloman2009causal,walsh2011meaning,gerstenberg2022would}. Since all objects are usually governed by the same physical equations, this assignment of types serves to provide post hoc explanations of a scene, rather than to identify inherent properties of its computational aspects. \citet{gerstenberg2024counterfactual} provides an `intuitive psychology' example that fits well with our scenario.

\section{Meta-Causality}
\label{sec:metaCausality}
Similar to \citet{pearl2018book}, we define Meta-Causality to be \emph{the [science of] change in qualitative cause-effect behavior.}
Usually, the mediating process may be too fine-grained to yield interpretable models. Therefore, we consider a set of variables of interest $\CVars$ modeled by an SCM and introduced by an \textit{abstraction} $\varphi: \StateSpace \to \CVarDomains$. (We provide a brief discussion on the emergence of SCM from mediating processes in Appendix \ref{sec:effectEmergenceAppx}). Here, $\varphi$ could be defined as a summarization or causal abstraction over the state space \citep{rubenstein2017causal,beckers2019abstracting,anand2022effect,wahl2023foundations,kekic2023targeted,willig2023consolidate}. In order to identify the type of causal relations from a mediating process, we need to be able to decide on what constitutes a type. 

\begin{definition}[Meta-Causal Frame]
\label{def:metaCausalFrame}
  For a given mediation process $\CMedProcess = (\StateSpace,\CStateTransition)$ a \textbf{meta-causal frame} is a tuple $\MetaCF = (\CMedProcess, \CVars, (\TypeEncoder_{ij}), \IdFunc)$ with:
  \begin{itemize}
    \item type-encoders $\TypeEncoder_{ij} : \CVarDomain_i \times \CVarDomains^{\StateSpace} \to \CVarTypeDom$ that assign a \textbf{type} $t \in \CVarTypeDom$ to the functional dependence of $X_j$ on $X_i$, induced by the underlying process $(\StateSpace,\CStateTransition)$. This relation is between $\CVarDomain_i$ (values of $X_i$) and the abstraction of the transition function $\varphi \circ \CStateTransition \in \CVarDomains^{\StateSpace} = \{\psi : \StateSpace \to \CVarDomains \}$. %
    \item an identification function $\IdFunc : \StateSpace \times \CVars \times \CVars \hspace{-0.25mm}\rightarrow \hspace{-0.25mm}\CVarTypeDom$ with $\IdFunc(s,X_i,X_j) \mapsto t := \TypeEncoder_{ij}(\varphi(s),\varphi \circ \CStateTransition)$ that assigns a type to every pair of causal variables for any state of the environment.
  \end{itemize}
\end{definition}

\textit{Types} generalize the role of edges in causal graphs, while type encoders $\TypeEncoder_{ij}$ determine the particular type of edges from properties of the underlying functional relations $\varphi \circ \CStateTransition$. In most classical scenarios, the co-domain $\CVarTypeDom$ of the type encoder $\TypeEncoder_{ij}$ is chosen to be Boolean, representing the \textit{existence} or \textit{absence} of edges. In other cases, different values $t \in \mathcal{T}$ can be understood as particular types of edges, like positive, negative, or the absence of influence. This will help us to distinguish meta-causal states that share the same graph adjacency. The only requirement for $\CVarTypeDom$ is that it must contain a special value $\EdgeZeroType$, which indicates the total absence of an edge. 
\textit{Meta-causal states} now generalize the idea of binary adjacency matrices. We, intentionally, do not restrict the identification function in any way. However, particular choices, e.g. functions identifying only direct causal effects, are more likely to result in classical SCM. We provide a short discussion on this in Appendix~\ref{sec:idFuncAppx}.

\begin{definition}[Meta-Causal State]
\label{def:MCS}
  In a meta-causal frame $\MetaCF = (\CMedProcess, \CVars, \TypeEncoder_{ij}, \IdFunc)$, a \textbf{meta-causal state} is a matrix $T \in \CVarTypeDom^{N \times N}$. For a given environment state $s \in \StateSpace$, the \textbf{actual meta-causal state} $T_s$ has the entries $T_{s,ij} := \IdFunc(s, X_i , X_j) = \TypeEncoder_{ij}(\varphi(s),\varphi \circ \CStateTransition)$. 
\end{definition}

A meta-causal state $T \in \CVarTypeDom^{N \times N}$ represents a graph containing edges $e_{ij}$ of a particular type $t \in \CVarTypeDom$. In particular $\MetaCausalState_{ij}$ indicates the presence ($T_{ij} \neq \EdgeZeroType$) or absence ($T_{ij} = \EdgeZeroType$) of individual edges. Our goal for meta-causal models (MCM) is to capture the dynamics of the underlying model. In particular, we are interested in modeling how different meta-causal states transition into each other. This behavior allows us to model them in a finite-state machine \citep{moore1956gedanken}:

\begin{definition}[Meta-Causal Model]
\label{def:MCM}
  For a meta-causal frame $\MetaCF = (\CMedProcess, \CVars, \CStateTransition, \IdFunc)$, a \textbf{meta-causal model} is a finite-state machine defined as a tuple $\MetaCausalModel = (\CVarTypeDom^{N \times N}, \StateSpace,\delta)$, where the set of meta-causal states $\CVarTypeDom^{N \times N}$ is the set of machine states, the set of environment states $\StateSpace$ is the input alphabet, and $\delta: \CVarTypeDom^{N \times N} \times \StateSpace \to \CVarTypeDom^{N \times N}$ is a transition function.
\end{definition}

Usually, we have the objective to learn the transition function $\delta$ for an unknown state transition function $\CStateTransition$. The state transition function $\delta$ can be approximated as $\delta(T,s) := \IdFunc(\CStateTransition(T,s), X_i, X_j) = \TypeEncoder_{ij}(\varphi(s),\varphi \circ \CStateTransition(T,s))$.

As standard causal relations emerge from the underlying mediation process, the meta-causal states emerge from different types of causal effects. The transition conditions of the finite-state machine are the configurations of the environment where the quality of some environmental dynamics represented by a type $t \in \CVarTypeDom$ changes.

\textbf{Inferring the Meta-Causal States.}
Even if the state transition function is known, it may be unclear from a single observation which exact meta-causal state led to the generation of a particular observation $S$. This is especially the case when two different meta-causal states can fit similar environmental dynamics. Even in the presence of latent factors (e.g., an agent's internal policy), the current dynamics of a system (e.g., induced by the agent's current policy) can sometimes be inferred from a series of observed environment states. This requires knowledge of the meta-causal dynamics, and is subject to the condition that sequences of observed states uniquely characterize the meta-causal state. Since MCM are defined as finite state machines, the exact condition for identifying the meta-causal state is that observed sequences are homing sequences \citep{moore1956gedanken}. Note that the following example of a game of tag presented below exactly satisfies this condition, where the meta-causal states produce disjoint sets of environment states (`agent A faces agent B', or `agent B faces agent A'), and thus can be inferred from either a single observation, when movement directions are observed, or two observations, when they need to be inferred from the change in position of two successive observations.

\textbf{`Game of Tag' Example.}
Consider an idealized game of tag between two agents, with a simple causal graph and two different meta-states. In general, we expect agent $B$ to make arbitrary moves that increase its distance to $A$, while $A$ tries to catch up to $B$, or vice versa. In essence, this is a cyclic causal relationship between the agents, where both states have the same binary adjacency matrix. Note, however, that the two states differ in the \textit{type} of relationship that goes from $A$ to $B$ (and $B$ to $A$). We can use an identification function that analyzes the current behavior of the agents to identify each edge. Since $A$ can tag $B$, the behavior of the system changes when the directions of the typed arrows are reversed, so that the type of the edges is either `escaping' or `chasing'.

While the underlying policy of an agent may not be apparent from observation as an endogenous variable, it can be inferred by observing the agents' actions over time. Knowing the rules of the game, one can assume that the encircling agent faces the other agent and thus moves towards it, while the fleeing agents show the opposite behavior: %
\begin{align*}
    (t_{B\rightarrow A} = \text{``A Chasing''}) \iff (\dot A_{pos} \boldsymbol{\cdot} (B_{pos}-A_{pos}) > 0)
\end{align*}
where $\dot{A}_{pos}$ is the velocity vector of agent $A$ (possibly computed from the position of two consecutive time steps); $A_{pos}$ and $B_{pos}$ are the agent positions, and $\boldsymbol{\cdot}$ is the dot product. Once the edge types are known, the policy can be identified immediately.

\textbf{Causal and Anti-Causal Meta-Causal States.}
Assigning meta-causal states to particular system observations can be understood as labeling the individual observations. However, it is generally unclear whether the meta-causal state has an observational or a controlling character on the system under consideration. One could ask the question whether the system dynamics cause the meta-causal state, whether the meta-causal state causes the system dynamics, (or whether they are actually the same concept), similar to the well-known discussion ``On Causal and Anticausal Learning'' by \citet{scholkopf2012causal}, but from a meta-causal perspective.

At this point in time, we cannot give definitive conditions on how to answer this question, but we present two examples that support either one of two opposing views. First, in Sec.~\ref{sec:stressExample} we present a scenario of a dynamical system where the structural equations of both meta-causal states are the same, since the system dynamics are governed by its self-referential system dynamics. Intervening on the meta-causal state will therefore have no effect on the underlying structural equations, and thus can have no effect on the actual system dynamics. In such scenarios, the meta-causal is rather a descriptive label and cannot be considered an external conditioning factor. In the following, we discuss the opposite example, where a meta-causal state can be modeled as an external variable conditioning the structural equations.

\textbf{Role of Contextual Independencies.}
Changes in the causal graph are often attributed to changes in the environment modeled by some exogenous variables $Z$, as, for example, leveraged with the invariant causal prediction \citep{peters2016causal}. %
In this case, we get MCM where the transition function only contains self-loops. This makes it clear that MCM are a more general tool in analyzing meta-causal structures. If we introduce a `no-edge' type $t=\EdgeZeroType$ for this scenario, a meta-causal model can describe the condition that $X_i$ and $X_j$ are \textit{contextually independent} for some $Z=z$. For a suitable type-encoder with a surjective mapping $\psi: \mathcal{Z} \to \CVarTypeDom^{N \times N}$ and a given family of compatible (see Appendix \ref{ref:appxFuncComposition}) decomposed structural equations $(f^z_{ij})_{z\in\mathcal{Z}}$ we have $f^z_j := \bigcirc_{i \in 1..N} f_{ij}|Z$ and
\begin{equation}
\label{eq:condStructEq}
    f_{ij}|Z := \begin{cases}
        f^{z}_{ij} & \textbf{if~} \psi(z)_{ij} \neq \EdgeZeroType \\
        (<*) & \textbf{otherwise}
    \end{cases}
\end{equation}

where $f^{z}_{ij}$ are the structural equations of the edge $e_{ij}$ that are active under the environment $Z$ and (in a slight imprecision in the actual definition) $(<*)$ is the function that carries on the previous function of the composition and discards $X_j$. While the `no-edge' type $\EdgeZeroType$ could be handled like any other type, we have listed it for clarity such that individual variables $\CVars_i$ become \textit{contextually independent} whenever $f_{ij}|Z = \EdgeZeroType$.
We provide conditions for the reduction of MCM in Appendix~\ref{appx:reductionMetaSCM}.

\section{Applications}
\label{sec:applications}
In this section, we discuss several applications of the meta-causal formalism. First, we revisit the motivational example and consider how meta-causal models can be used to attribute responsibility. Next, we identify the presence of multiple mechanisms for the bivariate case from sets of unlabeled data. Finally, we analyze the emergence of meta-causal states from a dynamical system, highlighting that meta-causal states are more expressive than simple conditionings of the adjacency matrix.
Code is made available at \url{https://github.com/MoritzWillig/metaCausalModels}.

\subsection{Attributing Responsibility}
\label{sec:attributingResponsibility}

Consider again the motivational example of Figure~\ref{fig:refFrame}, where an agent $A$ with position $A_X$ follows an agent $B$ with position $B_X$ as dictated by its policy $A_{\Policy}$. In this scene, we can imagine a counterfactual scenario in which we replace the `following' policy of agent $A$ with, e.g., a `standing still' policy and find that the \causesm{B_X}{A_X} edge vanishes. As a result, we infer the meta-causal mechanism \causes{$A_{\Policy}$}{(\causesm{B_X}{A_X})} for the system and thus $A_{\Policy}$ as the root cause of for values of $A_X$. In conclusion, while $A$ is conditioned on $B$ on a low level, the meta-causal reason for the existence of the edge \causesm{B}{A} is caused by the $A_{\Policy}$. Both attributions, tracing back causes through the structural equations $A_X := f(B_X)$ or our meta-causal approach, are valid conclusions in their own regard.

Note that in this scenario, a classic counterfactual consideration, $A_{X}^{A_{\Policy} = \text{standing still}} - A_{X}^{A_{\Policy} = \text{following}}$, would also have inferred an effect of $A_{\Policy}$ on $A_X$ from a purely value-based perspective. Attributing effects via observed changes in variable values is a valid approach, but it fails to explain preventative mechanisms. For example, consider the simple scenario where two locks prevent a door from opening. Classical counterfactual analysis would attribute zero effect to the opening of either lock. A meta-causal perspective would reveal that the opening of the first lock already changes the meta-causal state of the model by removing its causal mechanisms that condition the state of the door.
In a sense, our meta-causal causal perspective is similar to that of the actual causality framework \citep{halpern2016actual,chockler2004responsibility}. However, actual causality operates at the `actual',  i.e. value-based, level and does not take the mechanistic meta-causal view into perspective. Meta-causal models already allow us to reason about causal effects after opening the first lock due to a change in the causal graph, without us having to consider the effects of opening the second lock at some point to reach that conclusion.

\subsection{Discovering Meta-Causal States in the Bivariate Case}
\label{sec:exp_discovering}

\begin{table}
    \centering
    \begin{tabular}[valign=t]{c| c c c c c || c c c c c || c c c c c}
        {} & \multicolumn{5}{c}{$d=0.0$} & \multicolumn{5}{c}{$d=0.1$} & \multicolumn{5}{c}{$d=0.2$}\\
        \diagbox{$k^*$}{$k'$} & - & 1 & 2 & 3 & 4 & - & 1 & 2 & 3 & 4 & - & 1 & 2 & 3 & 4 \\
        \hline
        1 &  2 & \cellA 81 &  3 &  \cellB 7 & \cellB 7 
        &  2 & \cellA 85 &  3 &  \cellB 7 & 3 
        & 1  & \cellA 83 &  3 &  \cellB 8 & 5 \\
        \hline
        2 & \cellB 41 &  1 & \cellA 54 &  4 & 0 
        & \cellB 43 &  1 & \cellA 48 &  8 & 0 
        & \cellA 49 &  1 & \cellB 47 &  3 & 0 \\
        \hline
        3 & \cellA 68 &  0 &  4 & \cellB 22 & 6 
        & \cellA 63 &  0 &  2 & \cellB 30 & 5 
        & \cellA 77 &  2 & 0  & \cellB 13 & 8 \\
        \hline
        4 & \cellA 92 &  0 &  0 &  1 & \cellB 7
        & \cellA 89 &  0 &  0 &  2 & \cellB 9
        &  \cellA 87 &  0 & 1  &  5 & \cellB 7
    \end{tabular}
    
    \caption{\textbf{Confusion Matrices for Identifying Meta-Causal Mechanisms.} The table shows identification results for predicting the number of mechanisms for the bivariate case for 100 randomly sampled meta-causal mechanisms. $d$ is the maximum sample deviation from the average mechanism probability.
    Rows indicate the true number of mechanisms, while columns indicate the algorithms' predictions. `-' indicates setups where the algorithm did not make a decision. In general, the algorithm is rather conservative in its predictions. In all cases where a decision is made, the number of correct predictions along the diagonals dominate. The first and second most frequent predictions are marked in green and orange, respectively. \textit{(Best Viewed in Color)}}
    \label{fig:disentangleResults}
\end{table}

Our goal in this experiment is to recover the number of meta-causal states $K \in [1..4]$ from data that exists between two variables $X,Y$ that are directly connected by a linear equation with added noise. We assume that each meta-causal state gives rise to a different linear equation $f_k := \alpha_k X + \beta_k + \mathcal{N}, k \in \N$, where $\alpha_k, \beta_k$ are the slope and intercept of the respective mechanism and $\mathcal{N}$ is a zero-centered, symmetric, and quasiconvex noise distribution\footnote{Implying unimodality and monotonic decreasing from zero allows us to distinguish the noise mean and intercept and to recover the parameters from a simple linear regression}. Without loss of generality, we apply Laplacian noise, for which an L1-regression can estimate the true parameters of the linear equations \citep{hoyer2008nonlinear}. The causal direction of the mechanism is randomly chosen between different meta-causal states. The exact sampling parameters and plots of the resulting distributions are described in Sec.~\ref{appx:mechanismSampling} (and plots of sample distributions are shown in Fig.~\ref{fig:sampledMechanisms} in the Appendix). In general, this scenario corresponds to the setting described above of inferring the values of a latent variable $Z$ (with $K = |Z|$) that indicate a particular meta-causal state of the system. Our goal is to recover the number of parameterizations of the causal mechanisms, and as a consequence to be able to reason about which points were generated by which mechanism.

\textbf{Approach.}
The problem we are trying to solve is twofold : first, we are initially unaware of the underlying meta-causal state $t$ that generated a particular data point $(x_i, y_i)$, which prevents us from estimating the parameterization $(\alpha_k, \beta_k)$ of the mechanism. Conversely, our lack of knowledge about the mechanism parameterizations $(\alpha_k, \beta_k)_{k \in [1..K]}$ prevents us from assigning class probabilities to the individual data points. Since neither the state assignment nor the mechanism parameterizations are initially known, we perform an Expectation-Maximization (EM; \citet{dempster1977maximum}) procedure to iteratively estimate and assign the observed data points to the discovered meta-causal state parameterizations. Due to the local convergence properties of the EM algorithm, we further embed it into a locally optimized local random sample consensus approach (LO-RANSAC; \citet{fischler1981random,chum2003locally}). RANSAC approaches repeatedly sample initial parameter configurations to avoid local minima, and successively perform several steps of local optimization - here the EM algorithm - to regress the true parameters of the mechanism.

Assuming for the moment that the correct number of mechanisms $k^*$ has been chosen, we assume that the EM algorithm is able to regress the parameters of the mechanisms, $\alpha_k^*, \beta_k^*$, whenever there exists a pair of points for each of the mechanisms, where both points of the pair are samples generated by that particular mechanism. The chances of sampling such an initial configuration decrease rapidly with an increasing number of mechanisms (e.g. $0.036\%$ probability for $k=4$ and equal class probabilities). Furthermore, we assume that the sampling probabilities of the individual mechanisms in the data can deviate from the mean by up to a certain factor $d$. In our experiments, we consider setups with $d \in \{0.0, 0.1, 0.2\}$. Given the number of classes and the maximum sample deviation of the mechanisms, one can compute an upper bound on the number of resamples required to have a $95\%$ chance of drawing at least one valid initialization. The bound is maximized by assigning the first half of the classes the maximum deviation probability $P_\text{k-max} = (1+d)/k$ and the other half the minimum deviation probability $P_\text{k-min} = (1-d)/k$. We provide the formulas for the upper bound estimation and in Sec.~\ref{appx:resampleUpperBound} in the Appendix and provide the calculated required resample counts in Table~\ref{fig:RANSACResampleCounts}. In the worst case, for a scenario with $k=4$ mechanisms and $d=0.2$ maximum class probability deviation, nearly $14,900$ restarts of the EM algorithm are required, drastically increasing the potential runtime.

In our experiments, we find that our assumptions about EM convergence are rather conservative. Our evaluations show that the EM algorithm is still likely to be able to regress the true parameters, given that some of the initial points are sampled from incorrect mechanisms. We measure the empirical convergence rate by measuring the convergence rate of the EM algorithm over 5,000 different setups (500 randomly generated setups with 10 parameter resamples each). We perform 5 EM steps for setups with $k=1$ and $k=2$ mechanisms, and increase to 10 EM iterations for 3 and 4 mechanisms. For each initialization, we count the EM algorithm as converged if the slope and intercept of the true and predicted values do not differ by more than an absolute value of $0.2$.

\begin{table}
\centering

\begin{tabular}{c|c c c c c}
{} & \textbf{Analysis Type} & \textbf{1 Mechanism} & \textbf{2 Mech.} & \textbf{3 Mech.} & \textbf{4 Mech.} \\
\hline
Max. Class & Theoretical & 1 & 23 & 363 & 8,179 \\
Deviation = 0.0 & Empirical & 2 & 8 & 24 & 173 \\
\hline
Max. Class & Theoretical & 1 & 26 & 429 & 10,659 \\
Deviation = 0.1 & Empirical & 2 & 8 & 25 & 177 \\
\hline
Max. Class & Theoretical & 1 & 30 & 526 & 14,859 \\
Deviation = 0.2 & Empirical & 2 & 8 & 26 & 177 \\
\end{tabular}

\caption{\textbf{Estimated Number of Required Resamples for Obtaining a 95\% Convergence Rate with the LO-RANSAC Algorithm per Number of Mechanisms and Maximum Class Deviation.} The empirical observed convergence rates of the EM algorithm drastically reduce the theoretical derived bound of required samples. This reveals that convergence assumptions where chosen to be quite conservative, and attests a good fit of the EM algorithm for regressing mechanism parameters.
}
\label{fig:RANSACResampleCounts}
\end{table}

\textbf{Determining the Number of Mechanisms.}
The above approach is able to regress the true parameterization of mechanisms when the real number of parameters $k^*$ is given. However, it is still unclear how to determine the correct $k^*$. Computing the parameters for all $k \in K$ and comparing for the best goodness of fit is generally a bad indicator for choosing the right $k$, since fitting more mechanisms usually captures additional noise and thus reduces the error. In our case, we take advantage of the fact that we assumed the noise to be Laplacian distributed. Thus, the residuals of the samples assigned to a particular mechanism can be tested against the Laplacian distribution.

The EM algorithms return the estimated parameters $\alpha'_k, \beta'_k$, and the mean standard deviation $b'_k$ of the Laplacian\footnote{Since mechanisms can go in both directions, \causesm{X}{Y} and \causesm{Y}{X} (cyclic relations are not considered), we repeat the regression for both directions and use an Anderson-Darling test \citep{anderson1952asymptotic} on the residual to test which of the distributions more closely resembles a Laplacian distribution at each step.}. This allows one to compute the class densities $k(x_i,y_i; \alpha'_k, \beta'_k, b'_k)$ for a pair of values $(x_i,y_i)$. Since the assignment of mechanisms to a data point may be ambiguous due to the overlap in the estimated PDFs, we normalize the density values of all mechanisms per data point and consider only those points for which the probability of the dominant mechanism is $0.4$ higher than the second class: $(\mathbf{x}^j,\mathbf{y}^j) := ( (x_i,y_i) | \#1=j; f^{\#2}_i < P^{\#1}_i \times (1-0.4) )$, where $\#n$ indicates the class with the n-th highest density value. Finally, the residuals $\mathbf{y}^j - f'^j_k(\mathbf{x}^j, \mathbf{y}^j; \alpha'_k, \beta'_k, b'_k)$ are computed for all data points $(\mathbf{x}^j,\mathbf{x}^j)$ assigned to a particular predicted mechanism $f'^j_k$.
We choose the parameterization that best fits the data for each $k \in [1..4]$ and, make use of the Anderson-Darling test \citep{anderson1952asymptotic} to test the empirical distribution function of the residuals against the Laplacian distribution with an $\alpha = 0.95$ using significance values estimated by \citet{chen2002tests}. If all residual distributions of all mechanisms for a given $k'$ pass the Anderson-Darling test, we choose that number as our predicted number of mechanisms. If the algorithm finds that none of the $k \in [1..4]$ setups pass the test, we refrain from making a decision. We provide the pseudo code for our method in Algorithm.~\ref{alg:recoverMechanisms} in the Appendix.

\textbf{Evaluation and Results.}
We evaluate our approach over all $k \in [1..4]$ by generating 100 different datasets for every particular number of mechanisms. For every dataset we sample 500 data points from each mechanism $(x^k_i, y^k_i) = f_k(\alpha_k x^k_i + \beta_k + l_i)$ where $l_i \sim \mathcal{L}(0, b_k)$, using the same sampling method as before (c.f. Appendix~\ref{appx:mechanismSampling}). Finally, the algorithm recovers the number of mechanisms.

We compare theoretically computed and empirical in convergence results in Table~\ref{fig:RANSACResampleCounts}. In practice, we observe that the convergence of the EM algorithm is more favorable than estimated, reducing from $23$ to $8$ required examples for the simple case of $k=2, d=0.0$, and requiring up to 83-times fewer samples for the most challenging setup of $k=4, d=0.2$, reducing from a theoretical of 14,859 to an empirical estimate of 177 samples. The actual convergence probabilities and the formula for deriving to sample counts are given in Table~\ref{fig:EMConvergenceAppx} and Sec.~\ref{appx:emConvergenceResults} of the Appendix.

Table~\ref{fig:disentangleResults} shows the confusion matrices between the actual number of mechanism and the predicted number for different values of maximum class imbalances. In general, we find that our approach is rather conservative when in assigning a number of mechanisms. However, when only considering the cases where the number of mechanisms is assigned, the correct predictions along the main diagonal dominate with over $60\%$ accuracy for $k=4$ and $d \in \{0.0,0.1\}$ and rising above $80\%$ for $k \neq 4$ for. In the case of $d=0.2$, the results indicate higher confusion rates with $41.6\%$ accuracy for the overall worst case of $K=4$.

\textbf{Extension to Meta-Causal State Discovery on Graphs.}
Our results indicate that identifying meta-causal mechanisms even in the bivariate case comes with an increasing number of uncertainty when it comes to increasing numbers of mechanisms. Given that the number of mechanisms could be reliably inferred from data for all variables, the meta-causal states could be identified as all unique combinations of mechanisms that are jointly active at a certain point in time. To recover the full set of meta-causal states, one needs to be able to simultaneously estimate the triple of active parents for every mechanism, the mechanisms parameterizations, and the resulting meta-state assignment of all data points for every meta-state of the system. All of the three components can vary between each meta-causal state (e.g. edges vanishing or possibly switching direction, thus altering the parent sets). Note that whenever any of the three components is known, the problem becomes rather trivial. However, without making any additional assumptions on the model or data and given the results on the already challenging task of identifying mechanisms in the bivariate case, the extension to a unsupervised full-fledged meta-causal state discovery is not obvious to us at the time of writing and we leave it to future work to come up with a feasible algorithm.

\subsection{A Meta-State Analysis on Stress-Induced Fatigue}
\label{sec:stressExample}

\begin{figure}
    \centering
    \includegraphics[width=0.32\linewidth,valign=t]{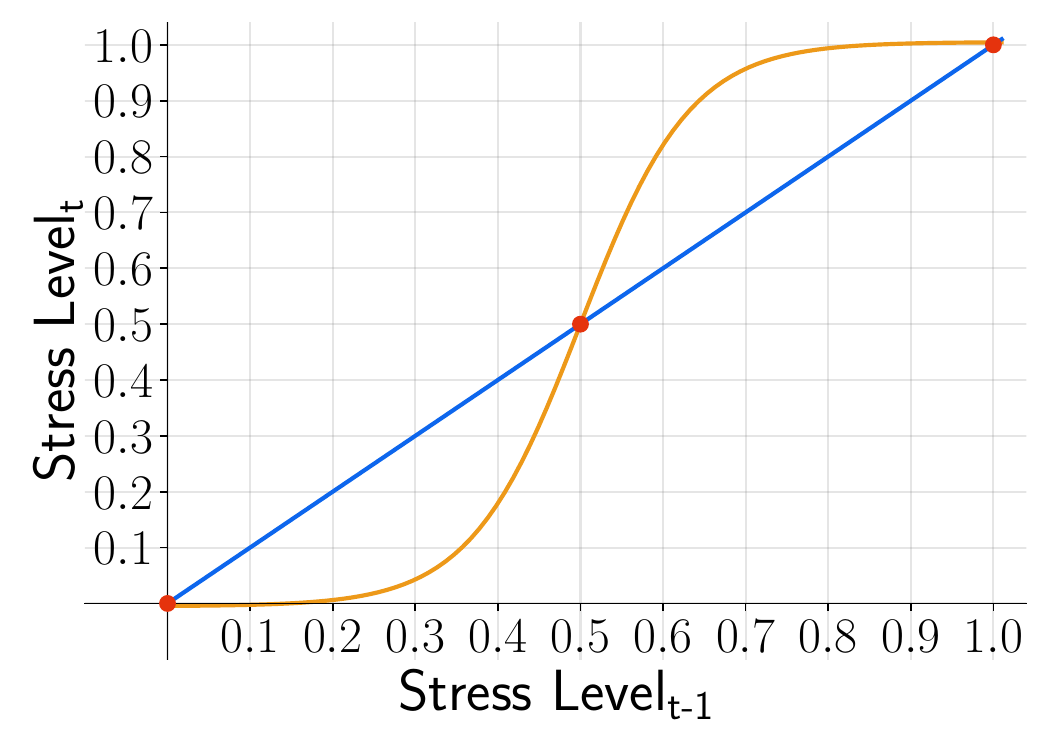}
    \hspace{5mm}
    \includegraphics[width=0.32\linewidth,valign=t]{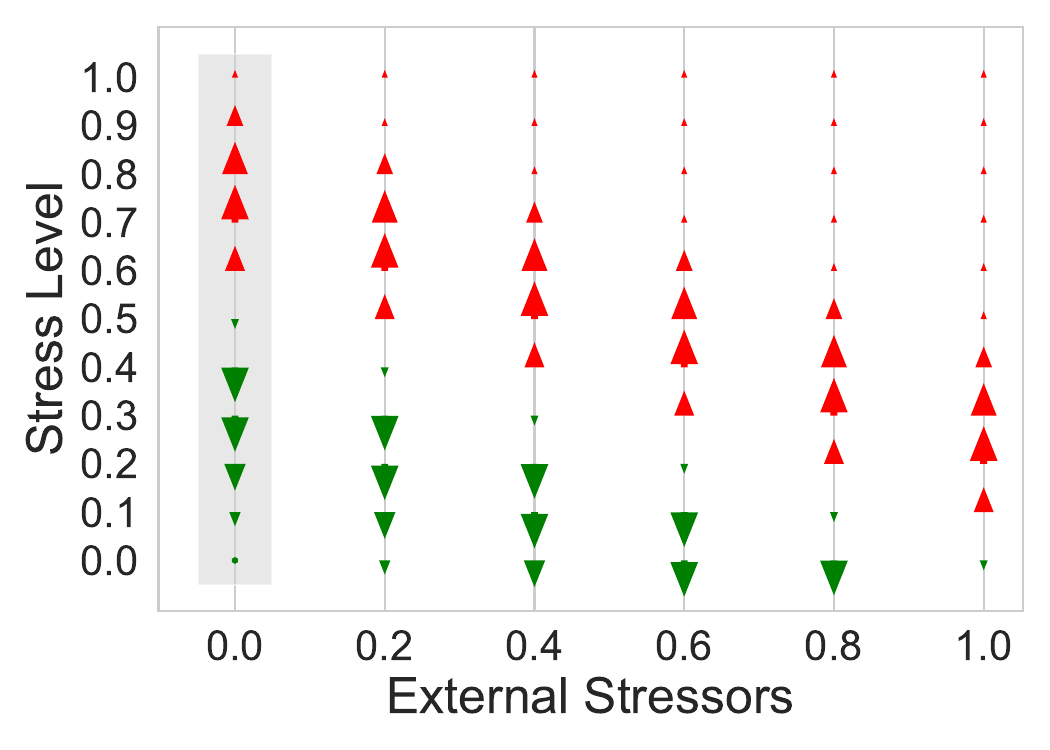}
    \hspace{6mm}
    \includegraphics[width=0.178\linewidth,valign=t]{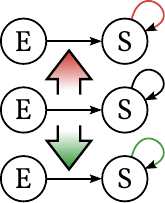}
    \caption{\textbf{Mechanistic Decomposition as Meta-Causal States}. \textbf{(left)} The effect of the stress level on itself (orange) plotted against the identity (blue; corresponding to a non-self-reinforcing effect). Once a certain threshold is reached, the function switches its behavior from self-suppressing to a self-reinforcing effect. \textbf{(center)} Contribution of the stress level mechanism for varying external stressors. Red arrows indicate a self-reinforcing effect, while green arrows indicate a suppressive effect. The gray area highlights the system configuration without external stressors. Although all states are governed by the same structural equations, our meta-causal analysis identifies the mechanistic difference and decomposes the corresponding initial conditions into different meta-causal states. \textbf{(right)} The standard SCM gets decomposed into different meta-causal states. While the graph adjacency remains the same, the different starting conditions identify different behavioral types of the causal mechanism. \textit{(Best viewed in color.)}}
    \label{fig:stability}
\end{figure}

Stress is a major cause of fatigue and can lead to other long-term health problems \citep{maisel2021fatigue,franklin2012neural,dimsdale2008psychological,bremner2006traumatic}.
While short-term exposure may be helpful in enhancing individual performance, long-term stress is detrimental and resilience may decline over time \citep{wang2011emotion,maisel2021fatigue}.
While actual and perceived stress levels \citep{cohen1983global,bremner2006traumatic,schlotz2011perceived,giannakakis2019review} may vary between individuals \citep{calkins1994origins, haggerty1996stress}, the overall effect remains the same.
We want to model such a system as an example. For simplicity, we present an idealized system that is radically reduced to the only factors of external stress, modeling everyday environmental factors, and the self-influencing level of internal/perceived stress.

While being rather simple in setup, the example serves as a good demonstration on how dynamical systems induce meta-causal states that exhibit qualitative different behavior, while employing the same set of underlying structural equations. As such, the inherent behavior of the system is not only due to a conditioning external factor. We use an identification function to distinguish between two different modes of operation of a causal mechanism. In particular, we are interested whether the inherent dynamics of the internal stress level are self-reinforcing or self-suppressing. For easier analysis of the system we decompose the dynamics of the internal stress variable into a `decayed stress' $d$ and `resulting stress' $s$ computation. The first term are the previous stress levels decayed over time with external factors added. The resulting stress is then the output of a Sigmoidal function (Fig.~\ref{fig:stability}, left)  that either reinforces or suppresses the value (Fig.~\ref{fig:stability},center). The structural equations are defined as follows and we assume that all values lie in the interval of $[0,1]$:
\begin{align*}
   f_d := 0.95~\text{clip}_{[0,1]}(s'+ 0.5 \times \text{ext.Stress}) &\textbf{~~and~~} 
   f_s := 1.01(\frac{1}{1+\text{exp}({-15x+7.5})} - 0.5) + 0.5    
\end{align*}
where $s$ and $d$ are the resulting and decayed stress levels, and $s'$ is the previous stress level of $s$.

We now define the identification function to be $\IdFunc := \text{sign}(\ddot{f}_s)$, where $\ddot{f}_s$ is the second order derivative of the Sigmoidal $f_s$, with either positive or negative effect on the stress level. The described system has two stable modes of dynamics. Note how the second-order inflection point at 0.5 of the Sigmoid acts as a transition point on the behavior of the mechanism. Stress values below 0.5 get suppressed, while values above 0.5 are amplified. Transitions between the two stable states can only be initiated via external stressors. %
Effectively this results in three possible meta-causal states which are governed via the following transition function:
\begin{align*}
\sigma: (t,s) \mapsto 
\begin{bmatrix}
1 & a\\
0 & 0
\end{bmatrix}
\text{~with~} a:= \text{sign}(\ddot{f}_s) = \text{sign}(s-0.5) \in \{-1,0,1\}
\end{align*}

\textbf{Role of Latent Conditioning.} A key takeaway of this example is that the current meta-state persists due to the inherent stress level and dynamics of the system. In contrast to other examples where system dynamics where purely due to the meta-causal state, variable \textit{values} play a role in the overall system dynamics. As a result, the stressed state of a person would persist even when the initiating external stressors disappear. Intervening on the meta-causal state of the system is now ill-defined, as both positive and negative reinforcing effects are governed by the same equation. Thus, creating a disparity between the intervened meta-causal state and the systems' identified functional behavior.

\section{Related Work}
\label{sec:relatedWork}

\textbf{Causal Transportability and Reinforcement Learning.} Meta-causal models cover cases that reduce to conditionally dependent causal graphs due to changing environments \citep{peters2016causal,heinze2018invariant}, but also extend beyond that for dynamical systems.
In this sense, the work of \citet{talon2024towards} takes a meta-causal view by transporting edges of different causal effects between environments. In general, the transportability of causal relations \citep{zhang2017transfer,lee2018structural,correa2022counterfactual} can be thought of as learning identification functions that identify general conditions of the underlying processes to transfer certain types of causal effects between environments \cite{sonar2021invariant,yu2024learning,brouillard2022typing}. This has been studied to some extent under the name of meta-reinforcement learning, which attempts to predict causal relations from the observations of environments \citet{sauter2023meta,dasgupta2019causal}.
Generally, transferability has been considered in reinforcement learning, where the efficient use of data is omnipresent and causality provides guarantees regarding the transferability of mechanisms between environments \citep{saemundsson2018meta,dasgupta2019causal,zeng2023survey}.

\textbf{Gating Models.} Modeling switching causal relations \citep{liu2023graph} via so called `gates' has been considered in prior works \citet{minka2008gates,winn2012causality}. While MCMs extend beyond context-specific independencies, gates pose a practical way of modeling switching relations in cases where MCMs can be reduced to context-conditioned SCM.

\textbf{Large Language Models (LLMs).}
Meta-causal representations are an important consideration for LLMs and other foundation models, since these models are typically limited to learning world dynamics from purely observational textual descriptions. %
LLMs need to learn meta-causal models
that allow them to simulate the 
consequences of interventions \citep{lampinen2024passive, li2021implicit,li2022emergent}. To the best of our knowledge, \citet{zevcevic2023causal} made the first attempt to define explicit meta-causal models that integrate with the Pearlian causal framework. However, their MCM are purely defined as adjacency matrix memorization, such that causal reasoning in LLMs equals a simple knowledge recall $(\causesmath{X}{Y}) \Leftrightarrow (e_{XY} \in \text{Mem})$ and thus fails to generalize to novel scenarios. %

\textbf{Actual Causality, Attribution and Cognition.}
Our MCM framework can be used to infer and attribute responsibility, as shown in Sec.~\ref{sec:attributingResponsibility}, and therefore touches on the topics of fairness, \textit{actual causation} (AC) and work on counterfactual reasoning in cognitive science. \citep{von2022fairness,karimi2021algorithmic,halpern2000axiomatizing,halpern2016actual,chockler2004responsibility,gerstenberg2014counterfactual,gerstenberg2024counterfactual}.  The similarities also extend to how MCM encourage reasoning about the dependence between actual environmental contingencies and qualitative types of causal mechanisms. In this sense, MCM allow for the direct characterization of actual causes of system dynamics types, but a rigorous formalization is open for future work. While AC in combination with classical SCM only describes relationships between causal variable configurations and observable events, there are cases where we can take an MCM and derive an SCM that encodes the meta-causal types with instrumental variables, thus allowing a similar meta-causal analysis within the AC framework. 

\section{Conclusion}
\label{sec:conclusion}

We formally introduced meta-causal models that are able to capture the dynamics of switching causal types of causal graphs and, in many cases, better express the qualitative behavior of the system under consideration. Within MCM, types generalize the notion of specific structural equations and abstract away unnecessary detail.
We presented a motivating example of how a classical causal and a meta-causal inference might %
disagree on the attribute of root causes.
We extended claims by considering that MCM are still able attribute changes in mechanistic behavior of a system, even when no actual changes becomes apparent.
We presented a first approach to recover meta-causal states in the bivariate case. Although our experimental results only represent a first preliminary approach, we find that MCM are a powerful tool for modeling, reasoning, and inferring system dynamics.
We demonstrated how MCM can be deployed dynamical systems and proved that they extend beyond conventional SCM.

\textbf{Limitations and Future Work.}
While we have formally introduced meta-causal models, there remain several open directions to pursue, which we briefly touch on in Appendix~\ref{appx:practicalApplications}. We have been able to provide examples that illustrate the differences between standard causal, and meta-causal attribution. In particular, the combined application of standard and meta-causal explainability will allow for the joint consideration of actual and mechanistic in future attribution methods. However, our approach to recovering meta-causal states from unlabeled data is open to extension. Discovery on the full causal graphs is a desirable goal that is difficult to achieve for the reasons discussed in this paper. Finally, we made a first attempt to present examples for and against the controlling or observational role of meta-causal models, which we briefly discuss further in Appendix~\ref{appx:assertiveMCM}. While, the presented observational perspective on MCM is mainly of interest for analytical applications, the application to agent systems and reinforcement learning might open up further fields of application.

\subsubsection*{Acknowledgments}
The Technical University of Darmstadt authors received funding by the EU project EXPLAIN, funded by the German Federal Ministry of Education and Research (BMBF) (grant 01—S22030D). They acknowledge the support of the German Science Foundation (DFG) project “Causality, Argumentation, and Machine Learning” (CAML2, KE 1686/3-2) of the SPP 1999 “Robust Argumentation Machines” (RATIO). This work is supported by the Hessian Ministry of Higher Education, Research, Science and the Arts (HMWK; projects “The Third Wave of AI”). It was funded by the European Union. Views and opinions expressed are, however, those of the author(s) only and do not necessarily reflect those of the European Union or the European Health and Digital Executive Agency (HaDEA). Neither the European Union nor the granting authority can be held responsible for them. Grant Agreement no. 101120763 - TANGO. This work was supported from the National High-Performance Computing project for Computational Engineering Sciences (NHR4CES).

The Eindhoven University of Technology authors received support from their Department of Mathematics and Computer Science and the Eindhoven Artificial Intelligence Systems Institute.

\bibliography{refs}
\bibliographystyle{iclr2025_conference}

\newpage

\appendix

\section*{Appendix for ``Systems with Switching Causal Relations: A Meta-Causal Perspective''}

The appendix is structured as follows:
In Sec.~\ref{sec:effectEmergenceAppx} we describe the general emergence of causal relations in SCM from the underlying mediating process. In Sec.~\ref{sec:idFuncAppx} we discuss desirable classes of identification functions. In Sec.~\ref{ref:appxFuncComposition} we consider the existence and compatibility of decomposed equations. In Sec.~\ref{appx:reductionMetaSCM} we prove a condition for the reduction for meta-causal models to conditioned SCM.
In Sec.~\ref{appx:resampleUpperBound} we derive the formula for the theoretical upper bound of the $95\%$ confidence interval. Sec~\ref{appx:emConvergenceResults} present the sample statistics and convergence results of the applied LO-RANSAC algorithm. In Sec.~\ref{appx:mechanismSampling} we present the exact parameters for sampling bivariate meta-causal mechanism data. In Sec.~\ref{appx:practicalApplications} we discuss further practical applications of meta-causal models. Finally, in Sec.~\ref{appx:assertiveMCM}, we discuss possible assertive properties of meta-causal models.

\section{Emergence of Causal Effects from Mediating Processes}
\label{sec:effectEmergenceAppx}
The definition of a meta-causal frames (Def.~\ref{def:metaCausalFrame}) grounds the emergence of standard SCM to an underlying mediating process $\CMedProcess$. In particular, for any meta-causal frame, particular causal interactions are read via the identification function. While the underlying mediation process is time-dependent, the resulting causal graph is the projection of all causal interactions within the process onto a graphical structure. Note, however, that the resulting SCM still preserves the sequence of causal interactions through the DAG induced partial ordering of variables. This ‘logical’ time ordering (induced via the partial order) can be seen as an abstraction of the underlying process.

\section{Desirable Classes of Identification Functions}
\label{sec:idFuncAppx}

Particular choices of different identification functions will result in different identified meta-causal states. However, from a classical causal perspective it might be desirable to choose particular classes of functions to identify faithful SCM. 
In particular, one could ask the question whether indirect effects are of interest and should also be identified by the identification function. Consider a scenario where three variables $X,Y,Z$ are considered as part of an to-be-identified SCM from an underlying process. In our scenario we identify the direct causal effects \causesm{X}{Y} and \causesm{Y}{Z} and --under the assumption that no further direct edges can be identified--, assume the graph to be faithful and we do not identify the indirect causal relation \causesm{X}{Z}. This however changes, once variable $Y$ is dropped out of the variable set $\CVars$ of the SCM. Now, with the same underlying mediating process we want our identification function to identify \causesm{X}{Z}, which was omitted before. Generally, we assume that in most cases identification function that identify direct causal effects with regard to $\CVars$ are the most common.

Following this rather high-level discussion, we frame the previous procedure in terms of the identification function and functional relation. Given all functional relations $\mathcal{X}^S$, the type decoder can determine whether the whole  functional relation $\mathcal{X}_{j}$ between variables $X_j$ is mediated by some other (set of) intermediate mechanism(s) $\mathcal{X}_{\mathbf{z}}$\footnote{Note in this context, that we assume structural equations to be uniquely identifiable. This might be implemented via type-theoretic considerations in which every variable of the underlying process gets assigned a unique type, such that for every $s_i \in \StateSpace_i$,  $\forall i, j. i \neq j \implArrow \StateSpace_i \neq \StateSpace_j$ hold. Even though two functions might be isomorphic they can be thus be differentiated via their domain and codomain. In practice, one might additionally consider the computations graph of the mediation process to uniquely differentiate between isomorphic functions}. In situations where $X_i \rightarrow X_{\mathbf{z}} \rightarrow X_j$ the type encoder needs to identify whether the relation $X_i \rightarrow X_j$ is purely mediated via $\mathcal{X}^S_{\mathbf{z}}$ (meaning that all computational paths from $X_i$ to $X_j$ in $\mathcal{X}^S_{j}$ that does not passes through $\mathcal{X}^S_{\mathbf{z}}$) or some additional direct effect exists that are not `shielded' by $\mathcal{X}^S_{\mathbf{z}}$. Conversely, whenever some $X_k \in X_\mathbf{z}$ of the intermediate set is dropped/marginalized from the causal variables, the mechanism $\mathcal{X}^S_{k} \subset \mathcal{X}^S_j$ is no longer identified as the mechanism of a causal variable within the overall $\mathcal{X}^S_j$ and (given that the remaining $\mathcal{X}^S_{\mathbf{z}\setminus \{ k \}}$ does not shield $X_i$ from $X_j$)  a direct arrow $X_i \rightarrow X_j$ can be inferred.

Given our meta-causal framework we, however, have the freedom to identify the indirect relations. In that case one might assign these edges a particular `[1-hop] indirection' type, indicating it to be the result of an indirect relation via an (possibly unobserved) intermediate variable. Consider, how this notion of edge types over unobserved variables is already in use in CPDAGs --as for example produced by the PC or GES algorithms \citep{spirtes2000causation,chickering2002optimal}-- to indicate undirected edges for which the orientation is currently unknown.

\section{Compatibility of Decomposed Equations}
\label{ref:appxFuncComposition}

In this section, we provide a brief discussion on the existence and compatibility of functions $X$ as considered in Equation~\ref{eq:condStructEq}.

Since the conditioned SCM in Eq.~\ref{eq:condStructEq} is modeled after an the particular causal model that exists under a meta-causal state indexed by $z$, it follows that a particular composition of functions $f^z_{ij}$ has to exist (and that the functions are compatible), since the full function $f^z_j$ exists for each meta-causal state. In a na\"ive approach, the order of composition $i \in \{1..N\}$ enforces a particular evaluation order of the functions, and in particular requires this order to be the same for every meta-causal state.

Generally, the presented function composition might not work, in the case that the individual functions get chosen badly from the start. A simple solution to overcome such problem is to assume compositions of lifted functions and assume their signatures to be compatible (which is always permitted due to the known existence of the composed $f^z_j$). Note that functions $f^z_{ij}, f^{z'}_{ij}$, whose functional type $t_{ij}$ did not change under a change of $z$ to $z'$, to make signatures compatible.

\textbf{Common handling of compositionality in SCM:} The adjustment of signatures is in fact often considered in the case of compositional SCM, e.g. additive noise models, where the signature of an outer `merging function' $f_j(f_{1j}, \dots, f_{Nj})$, e.g. $\sum_{i\in pa(j)} f_{ij}$, is in fact adjusted based on the number of parents (or, otherwise, zero weight edges are incorrectly excluded from the parent set).

\section{Reduction of Meta-Causal Models to Conditioned SCM}
\label{appx:reductionMetaSCM}

In this section, we provide a condition under which meta-causal models can be reduced to ordinary SCM with structural equations conditioned on some external variable as related to Eq.~\ref{eq:condStructEq}.

\begin{definition}[MCM Reducability]
    For a given mediation process $\CMedProcess = (\StateSpace,\CStateTransition)$ and abstraction $\varphi: \StateSpace
    \rightarrow \CVarDomains$ we call a meta-causal model $\MetaCausalModel = (\CVarTypeDom^{N \times N}, \StateSpace,\delta)$ \textbf{reducible} to a conditioned SCM if there is some SCM over $\CVars \cup \{Z\}$ where the types of functional dependencies between $X_i,X_j \in \CVars$ can be fully determined by $Z$. 
\end{definition}

\begin{theorem}[Specific Criterion for Meta-Causal Reducability]
    If for a given mediation process $\CMedProcess = (\StateSpace,\CStateTransition)$ and abstraction $\varphi: \StateSpace \rightarrow \CVarDomains$, a meta-causal model $\MetaCausalModel = (\CVarTypeDom^{N \times N}, \StateSpace,\delta)$ all its transitions are loops, then it is reducible to a conditioned SCM.
\end{theorem}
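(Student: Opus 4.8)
The plan is to read the hypothesis ``all transitions are loops'' as the statement that $\delta(T,s)=T$ for every meta-causal state $T\in\CVarTypeDom^{N\times N}$ and every input $s\in\StateSpace$; equivalently, that the actual meta-causal state is invariant along the dynamics, $T_{\CStateTransition(s)}=T_s$ for all $s\in\StateSpace$ (and along every branch, if $\CStateTransition$ is non-deterministic). First I would record the immediate consequence that $s\mapsto T_s$ is constant on every trajectory of the mediation process, i.e.\ constant on the set of states reachable from any given $s$, and let $\mathcal{Z}:=\{\,T_s : s\in\StateSpace\,\}\subseteq\CVarTypeDom^{N\times N}$ be the (sub-)set of realized meta-causal states. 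The point is that, under the loop hypothesis, the meta-causal state behaves like an exogenous label that the process never rewrites.

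Next I would adjoin a fresh variable $Z$ with domain $\mathcal{Z}$ and set $Z(s):=T_s$. Because $T_s$ is invariant under $\CStateTransition$, it cannot functionally depend on any $X_i\in\CVars$, so it may be consistently added to the variable set as a source node with a trivial (purely exogenous) structural equation, extending the abstraction to $\varphi':\StateSpace\to\CVarDomains\times\mathcal{Z}$, $\varphi'(s)=(\varphi(s),T_s)$. For each $z\in\mathcal{Z}$, let $\CSCM^{z}$ be the classical SCM that Definition~\ref{def:metaCausalFrame} associates with the mediation process restricted to the states whose meta-causal state is $z$; its structural equations $f^{z}_j$ exist, and by the discussion of Appendix~\ref{ref:appxFuncComposition} each decomposes into compatible edge functions $f^{z}_j=\bigcirc_{i \in 1..N} f^{z}_{ij}$. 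I would then define the conditioned SCM over $\CVars\cup\{Z\}$ exactly by Equation~\ref{eq:condStructEq}, taking for $\psi$ the inclusion $\mathcal{Z}\hookrightarrow\CVarTypeDom^{N\times N}$, so that $\psi(z)_{ij}=z_{ij}$ (which is surjective onto $\mathcal{Z}$, the only states that matter): the edge mechanism $f_{ij}\mid Z$ is active and equal to $f^{z}_{ij}$ precisely when $z_{ij}\neq\EdgeZeroType$ and is the carry-on function $(<*)$ otherwise, so that $\bigcirc_{i} f_{ij}\mid Z$ evaluated at $Z=z$ reassembles $f^{z}_j$.

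Finally I would check the reducibility condition: the type of the functional dependence of $X_j$ on $X_i$ at any state $s$ is $\IdFunc(s,X_i,X_j)=T_{s,ij}=Z(s)_{ij}$, i.e.\ a pure component projection of $Z$; hence the types of all functional dependencies between variables in $\CVars$ are fully determined by $Z$, which is exactly what the definition of MCM Reducability demands, and the SCM built above is the required witness. The genuinely load-bearing step — the main obstacle — is not producing $Z$ (that is forced by the loop hypothesis) but checking that the $Z$-gated decomposed equations are \emph{simultaneously} compatible across all $z\in\mathcal{Z}$, i.e.\ that a single evaluation order and a single family of lifted signatures works for every $z$ at once, so that specializing Equation~\ref{eq:condStructEq} to $Z=z$ genuinely returns $f^{z}_j$ for every $j$. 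I would discharge this as in Appendix~\ref{ref:appxFuncComposition}: pass to lifted functions with uniformized signatures (legitimate since each full $f^{z}_j$ is known to exist), and note that the inactive edges $z_{ij}=\EdgeZeroType$ are exactly the non-parents $X_i\notin\text{Pa}^{(z)}(X_j)$, which the carry-on function $(<*)$ absorbs correctly.
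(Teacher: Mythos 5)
Your proposal is correct and follows essentially the same route as the paper: the loop hypothesis forces $s\mapsto T_s$ to be invariant along the dynamics, so the meta-causal state can be adjoined as an exogenous variable $Z$ (valued in the realized meta-causal states, or equivalently indexing the $\sigma$-compatible equivalence classes of $\StateSpace$) that fully determines the functional dependence types. The paper's own proof stops at introducing $Z$; your additional explicit assembly of the conditioned structural equations via Equation~\ref{eq:condStructEq} and the compatibility check from Appendix~\ref{ref:appxFuncComposition} is a faithful elaboration of what the paper leaves implicit, not a different argument.
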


\begin{proof}
    If a meta-causal model only has loops as transitions, the meta-causal types are independent of the modeled mediation process. While types can differ for different starting conditions in the environment, its transition process never results in a type change. Hence, meta-causal types induce an equivalence relation on $\StateSpace$, compatible with $\sigma$ and we can introduce an exogenous variable $Z$ with $\mathcal{Z} = \CVarTypeDom^{N \times N}$ or, alternatively, with values $\mathcal{Z}$ for each connected component in $\CMedProcess = (\StateSpace,\CStateTransition)$.
\end{proof}

We also see some potential to weaken this criterion and, therefore, find a more general condition for reducibility by further examination of the abstraction that links the mediating process with the causal variables for future work.
The primary obstacle in this regard is that meta-causal models abstract away some information about structural equations, such that interventions on $Z$ might lead to a mismatch between the resulting SCM and the underlying mediation process.

\section{Probabilities for Sample Computation and Upper Bounds}
\label{appx:resampleUpperBound}

Consider a dataset $\mathcal{D} \in \mathbf{R}^{m \times 2}$ of $m$ samples over two variables where we want to separate $n$ different functions.
We assume that the data distribution contains a uniform number of samples from each function, where each class could be under- or overrepresented by an offset of $\BoundDeviation$.
Specifically, we assume that each function is represented by $(1 \pm \BoundDeviation) \frac{1}{n}|\mathcal{D}|$ samples, i.e., the probability of encountering a particular class $X$ is $\frac{1-\BoundDeviation}{n} \leq P(X) \leq \frac{1+\BoundDeviation}{n}$ and $\mathbb{E}[P(X)] = \frac{1}{n}$.
To identify all functions between these two variables, we assume linearity and apply EM with RANSAC on random pairs of samples (see Section~\ref{sec:exp_discovering}).
By selecting $n$ pairs, there is a chance that one pair is chosen from each function (we will refer to such a set of pairs as a ``correct'' set of samples).
In this section, we derive the probability of a correct pair being chosen at random, so that we can estimate how many times pairs need to be sampled to reliably encounter a correct set of samples.

We denote $S$ as the event of ``correctly'' sampling all $n$ pairs from all $n$ different functions.
If all classes have the same number of samples, the chance of randomly selecting a pair from a new class is
$\frac{n}{n} \cdot \frac{1}{n}$ for the first pair of samples, $\frac{n-1}{n} \cdot \frac{1}{n}$ for the second, $\dots$, and $\frac{1}{n} \cdot \frac{1}{n}$ for the last; in short:
\begin{equation}
    \mathbb{E}[P_n(S)] = \frac{n!}{n^n} \cdot \left(\frac{1}{n}\right)^n = \frac{n!}{n^{2n}}.
\end{equation}

If the data distribution is not perfectly uniform, i.e., $\BoundDeviation \neq 0$, we can also calculate a lower bound for the same probability.
Consider two probabilities per sample: the \textbf{probability of selecting a new class} $P_n(S^\text{new})$ and the \textbf{probability of selecting a second sample of the same class} $P_n(S^\text{same})$ afterwards.
Across all samples, these correspond to $\frac{n!}{n^n}$ and $(\frac{1}{n})^n$ in $\mathbb{E}[P_n(S)]$, respectively.

Let us first consider the \textbf{probability of selecting a new class}.
When the first sample is taken, only one new class can be selected (probability of 1).
If this sample was taken from the largest class first, the probability that subsequent samples will be taken from new classes decreases, since the space of ``unsampled'' classes is smaller.
For this lower bound, we therefore assume that maximally large classes are sampled from as much and as early as possible.
According to our assumptions, the largest classes each take up a fraction of $\frac{1+\BoundDeviation}{n}$ of the data.
Therefore, the probability of selecting a new class for successive samples has the following probabilities $\frac{n}{n}, \frac{n-(1+\BoundDeviation)}{n}, \frac{n-(1+\BoundDeviation)2}{n}, \dots$.
First, consider the case where $n$ is even.
Here, after all the $\frac{n}{2}$ largest classes have been selected, only the small classes remain.
For the last, second to last, $\dots$ classes, this probability is represented by $\frac{(1-\BoundDeviation)}{n}, \frac{2(1-\BoundDeviation)}{n}, \dots$.
Overall, we get the probability
$$P_n(S^\text{new}_\text{even}) \geq \left(\prod_{i=\frac{n}{2}}^n \frac{n-(1 + \BoundDeviation)(n-i)}{n}\right) \left( \prod_{i=1}^{\frac{n}{2}-1} \frac{i(1-\BoundDeviation)}{n}\right).$$
If $n$ is uneven,  an average size class between the largest and smallest classes must be included:
$$P_n(S^\text{new}_\text{odd}) \geq \left( \prod_{i=\frac{n}{2}+0.5}^n\frac{n-(1 + \BoundDeviation)(n-i)}{n} \right) \left( \frac{n-(1+\BoundDeviation)(\frac{n}{2} - 0.5) - 1}{n}\right) \left( \prod_{i=1}^{\frac{n}{2}-1.5}\frac{i(1-\BoundDeviation)}{n} \right).$$

The \textbf{probability of selecting a second sample of the same class} is easier to calculate.
Instead of constant probabilities as in the expectation with $\frac{1}{n}$, we now have two different probabilities in the even case and three in the odd case.
In the even case, we have $\frac{n}{2}$ large batches and the same number of small batches, so the probability of choosing the right batch each time is 
$$P_n(S^\text{same}_\text{even}) \geq \left( \frac{1+\BoundDeviation}{n} \right)^\frac{n}{2} \left( \frac{1-\BoundDeviation}{n} \right)^\frac{n}{2}.$$
In the uneven case, we also have to also consider the batch that has an average size 
$$P_n(S^\text{same}_\text{odd}) \geq \left( \frac{1+\BoundDeviation}{n} \right)^\frac{n}{2} \frac{1}{n} \left( \frac{1-\BoundDeviation}{n} \right)^\frac{n}{2}.$$
Note that for both $P_n(S_\text{new})$ and $P_n(S_\text{same})$, the distribution of the data into the largest and smallest possible batches (according to our assumptions) results in the smallest possible probabilities; hence, the computed probability is a lower bound.
If the batches were more evenly sized, the probability would be larger.
We can also see that a deviation of up to 1 results in a probability of 0, since it is impossible to sample from a class that is not represented in the data.

In total, the probability of selecting of a correct sample set is the product of the two probabilities above, i.e,
\begin{equation}
P_n(S) = 
\begin{cases} 
P_n(S^\text{new}_\text{even}) P_n(S^\text{same}_\text{even}) & \text{if } n \text{ is even} \\
P_n(S^\text{new}_\text{odd}) P_n(S^\text{same}_\text{odd}) & \text{if } n \text{ is odd.}
\end{cases}
\end{equation}

$P_n(S)$ is the (lower bound on the) probability of selecting a correct set of samples.
We can calculate the number of trials $k$ needed to find such a set of samples with at least 95\% probability.
The opposite probability, of never finding it with less than 5\% probability, is easier to calculate:
\begin{align*}
    (1 - P(S))^k  &\leq 1-0.95 = 0.05\\
    k \text{ ln}(1 -P(S)) &\leq \text{ln}(0.05)\\
    k &\leq \frac{\text{ln}(0.05)}{\text{ln}(1 - P(S))}
\end{align*}
This allows us to determine how many attempts might be necessary.
Note that while this would leave a 5\% chance of not picking the right samples, there are various practical reasons why the actual probability of finding a working set of samples will be higher, e.g., if the number of samples from each class is not as uneven as assumed, or if some samples are distributed in such a way that even picking a sample from the ``wrong'' class might still lead to the identification of the correct mechanisms.

For example, if $n=2$ and $\BoundDeviation=0.2$, we have an expected probability of $\mathbb{E}[P(S)] = \frac{2!}{2^{2 \cdot 2}} = 0.125$ and a lower bound of $P_2(S) = P(S^\text{new}_\text{even}) P(S^\text{same}_\text{even}) \geq \frac{0.8}{2} \cdot \frac{2}{2} \cdot  \frac{1.2}{2} \cdot \frac{0.8}{2} = 0.096$.
Larger deviations decrease the probability while a deviation of $\BoundDeviation = 0$ results in the same probability as with $\mathbb{E}[P(S)]$.
For the lower bound, this results in $k=30$ samples for a confidence of 95\% using the above calculation steps.
All resulting sample counts can be found in Table~\ref{fig:RANSACResampleCounts}.

\begin{figure}[h]
    \centering
    \textbf{K=1}
    \includegraphics[width=0.4\linewidth]{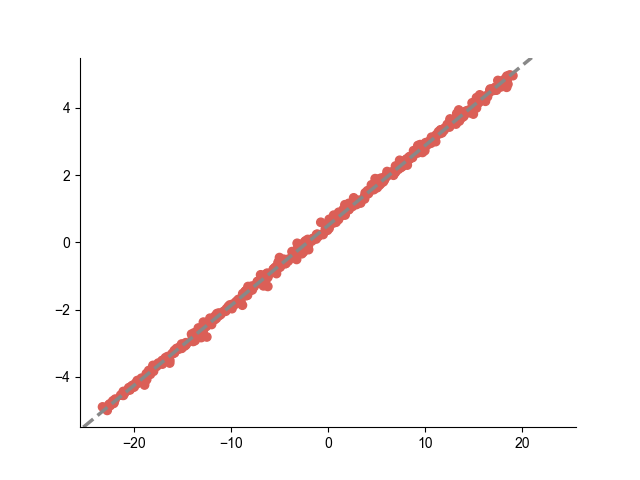}
    \includegraphics[width=0.4\linewidth]{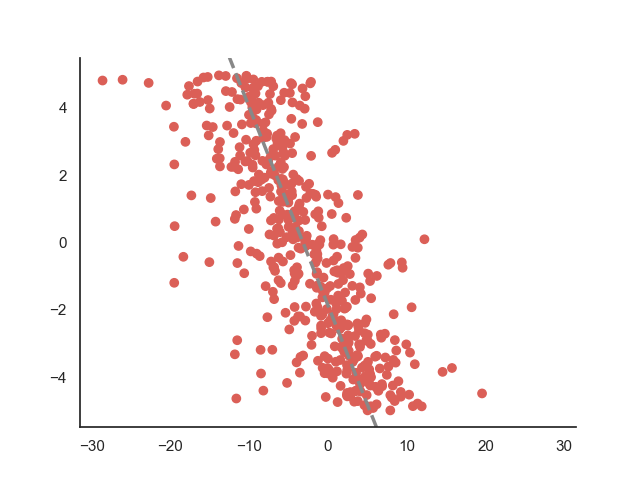}
    
    \textbf{K=2}
    \includegraphics[width=0.4\linewidth]{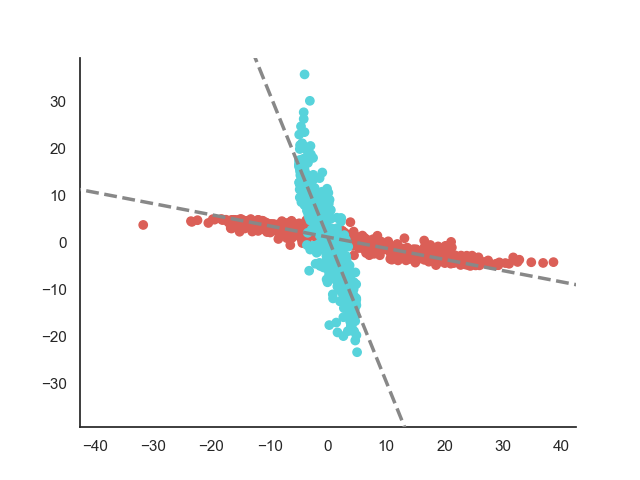}
    \includegraphics[width=0.4\linewidth]{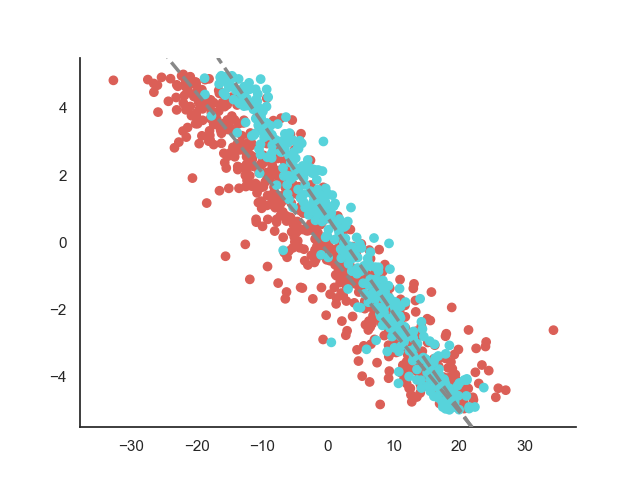}

    \textbf{K=3}
    \includegraphics[width=0.4\linewidth]{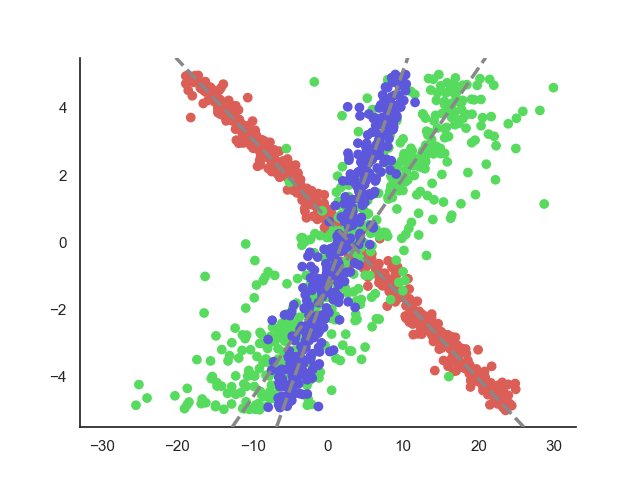}
    \includegraphics[width=0.4\linewidth]{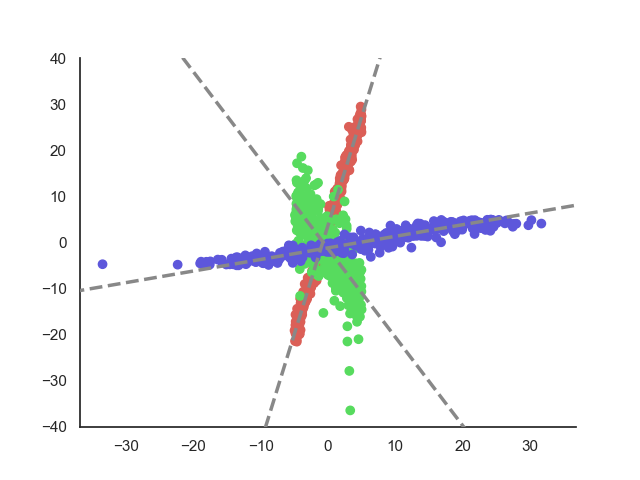}

    \textbf{K=4}
    \includegraphics[width=0.4\linewidth]{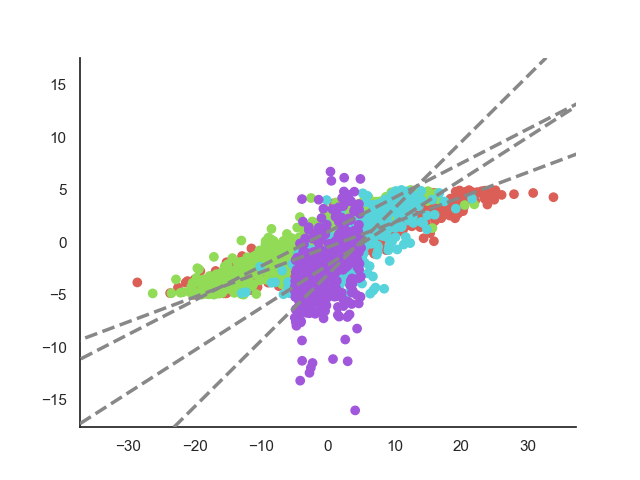}
    \includegraphics[width=0.4\linewidth]{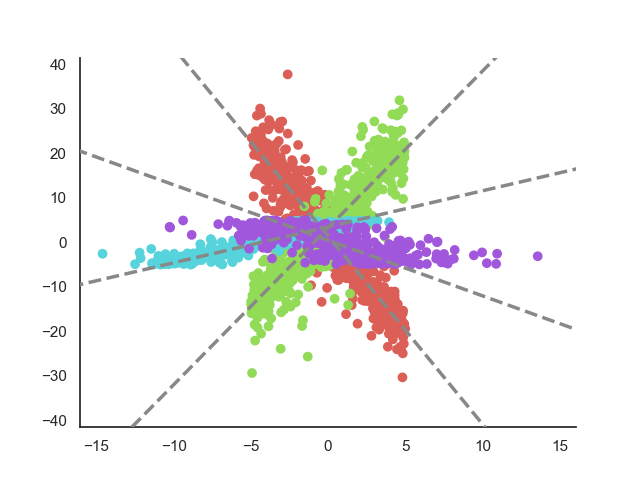}
    \caption{\textbf{Sampled Mechanisms.} The figure shows a selection of different randomly sampled mechanism distributions, ranging from one to up to four simultaneously present mechanisms. The gray dotted lines represent the generating ground truth mechanisms. \textit{(Best Viewed In Color)}}
    \label{fig:sampledMechanisms}
\end{figure}

\section{EM Convergence Results}
\label{appx:emConvergenceResults}

The required number of resamples for a $95\%$ success rate of the RANSAC algorithm is calculated by $\text{log}(0.05)/\text{log}(1-C^1)$, where $S1$ are the convergence rates for the individual samples computed in Sec.~\ref{appx:resampleUpperBound}.

Empirical convergence probabilities and resulting resampling counts for the LO-RANSAC algorithm are shown in Tables~\ref{fig:RANSACResampleCounts} and \ref{fig:EMConvergenceAppx}. Table~\ref{fig:EMRegrErrorAppx} lists the goodness of fit for all converged samples. In general, we find that in cases where the approach is able to converge, it undercuts the required parameter convergence boundary of $0.2$ by factors of $4.8$ and $3.5$ for the slope and intercept, respectively.

\begin{table}
\centering
\begin{tabular}{c|c c c c}
\textbf{Class Deviation} & \textbf{1 Mechanism} & \textbf{2 Mechanisms} & \textbf{3 Mechanisms} & \textbf{4 Mechanisms} \\
\hline
0.0 & ${4219}~(84.38\%)$ & ${1740}~(34.80\%)$ & ${592}~(11.84\%)$ & ${86}~(1.72\%)$ \\
0.1 & $\ditto$ & ${1702}~(34.04\%)$ & ${577}~(11.54\%)$ & ${84}~(1.68 \%)$ \\
0.2 & $\ditto$ & ${1567}~(31.34 \%)$ & ${555}~(11.10\%)$ & ${84}~(1.68 \%)$ \\
\end{tabular}
\caption{\textbf{Empirical estimated convergence percentage of the EM algorithm for different class imbalances and number of mechanisms.} The table shows the number of samples converged and the convergence rates (in parentheses) for a single random initialization and for estimating the parameterization of the underlying system for a given number of mechanisms. All results are reported per 5000 samples.}
\label{fig:EMConvergenceAppx}
\end{table}

\begin{table}
\centering
\begin{tabular}{c|c c c c}
\textbf{Class Deviation} & \textbf{1 Mechanism} & \textbf{2 Mechanisms} & \textbf{3 Mechanisms} & \textbf{4 Mechanisms} \\
\hline
0.0 & $0.0349 | 0.0590$ & $0.0370 | 0.0543$ & $0.0380 | 0.0592$ & $0.0389 | 0.0516$ \\
0.1 & $\ditto$ & $0.0381 | 0.0555$ & $0.0381 | 0.0566$ & $0.0346 | 0.0528$ \\
0.2 & $\ditto$ & $0.0414 | 0.0548$ & $0.0412 | 0.0567$ & $0.0402 | 0.0570$ \\
\end{tabular}
\caption{\textbf{Mean average error for the slope and intercept of the correctly predicted mechanism for different class imbalances and number of mechanisms.} Mechanisms are accepted if their parameters do not differ by more than $0.2$ from the ground truth parameterization. The results show that converged samples are typically estimated with an error well below the threshold.}
\label{fig:EMRegrErrorAppx}
\end{table}

\section{Mechanism Sampling}
\label{appx:mechanismSampling}

For our experiments in Sec.~\ref{sec:exp_discovering} we uniformly sample the number of mechanisms to be in $K \in \{1..4\}$. The slopes of the linear equations are uniformly sampled between $\alpha \in \pm[0.2..5]$ and the intercepts are in the range $\beta \in [-5,5]$. We add Laplacian noise $\mathcal{L}(x|\mu,b) = \frac{1}{2b} * \exp(-\frac{|x-\mu|}{b})$ with $\mu=0$ and $b \in [0.1,4.0]$. $X$ values are uniformly sampled in the range $[-5, 5]$ and $y_i = \alpha x_i + \beta + \mathcal{L}(x|0,b)$. The average number of samples per class is set to 500. Throughout the experiments, we vary the class probabilities by a class deviation factor $D \in \{0.0, 0.1, 0.2\}$. Specifically, we maximize the class deviation by assigning $K/2$ classes the maximum probability $1/K*(1+D)$ and $K/2$ classes the minimum probability $1/K*(1-D)$. If $K$ is odd, a class is assigned the average probability $1/K$. We show a selection of the resulting sample distributions in Fig.~\ref{fig:sampledMechanisms}.

\section{Practical Applications of Meta-Causal Models}
\label{appx:practicalApplications}

As the main focus of our initial work on meta-causal model lies on providing a first, formal, definition of meta-causal models, we tried to approach MCM from a spectrum of different theoretical perspectives. In this Section, we provide a brief discussion on two possible practical fields of applications for MCM.

\textbf{Health and Medicine.} First, we would like to expand on our stress-induced fatigue example as it can not be reduced to a standard SCM, and provide an actionable perspective on MCM. While we still assume the underlying neuropsychological process to be more complex, with multiple interplaying factors to influence each other, we consider the same simplified model as presented in the paper. We now assume that some drug exists that is able to influence certain health related processes within the patient, such that the underlying --previously self-reinforcing-- stress relation is unconditionally changed to a suppressing one.
(Upon closer consideration, the previous intervention might constitute a meta-causal do-operator, as we detach the functional type from the underlying dynamics and fix its particular functional type.)

This perspective not only allows the forecast of system changes, but also yields an actionable model which can be actively steered between meta-causal states. To permanently treat a patient, one could consider the objective to reach a self-stabilizing meta-causal state. Note how this meta-causal objective might be different to that of a classical causal one, where stress levels would similarly be reduced, but no attention is placed on the (possibly unchanged) system dynamics, such that stress levels might rise up again after the intervention ends.

\textbf{Economics.} Second, recall that our MCMs are defined as finite state machines. Figuring out exact transition conditions that induce meta-causal state transitions also yield important insights on the volatility/stability of systems in terms of risk analysis and policy making. Such scenarios might commonly arise in economics, where relations in markets can change due to the sudden appearance of disrupting factors (e.g. a new competitor entering the market or a financial crisis) while effects might persist even with the disrupting factor having vanished.

\section{Assertive Meta-Causal Models}
\label{appx:assertiveMCM}

In this section we will briefly touch upon possible assertive properties of meta-causal models. In this initial paper we chose to utilize a very general definition of types, which intentionally held flexible to allow for the most descriptive models. As for this definition, there might exist a gap between between the descriptive modeling of MCM and their `assertive', data generating, properties. This gap primarily stems from the mapping of specific  structural equations onto abstract types, which prevents a back projection of types to structural equations in the general case. With the special class of conditionally reducible SCM we, however, presented a particular class of MCM that yield `assertive' properties by being able to translate types back onto the level of structural equations. Further restrictions on types and their relations to structural equations might be placed in specific applications, to shield users from  misuse of the framework. Still, we where able to present several relevant applications of MCM even with this this most general formalization of MCM.

While meta-causal states might be mapped back to configurations of `classical' causal models, we highlight that MCM are primarily concerned with the modeling of the overarching meta-causal state transitions. By modeling MCM as (possibly non-deterministic) finite-state machines, we, in fact, can make predictions about a systems' future course on the meta-level. This includes the sampling of state trajectories and making assertions about their stability and similar properties.

\newpage

\begin{algorithm}
	\caption{Recovering Mechanisms for the Bivariate Case}
	\begin{algorithmic}[1]
		\Procedure{RecoverMechanisms}{$\textbf{x}, \textbf{y}$, \text{maxClassDev} $K_{\text{max}}, \text{EMSteps}$}
		\ForAll{$k \in [1..K_{\text{max}}]$}
			\State $\text{bestModelLogL} \gets -\infty$
			\State $\text{N} \gets \text{requiredSamples}(k,\text{maxClassDev}, 0.95)$ \Comment{Compute $\#$ of samples (c.f. Sec.~\ref{appx:resampleUpperBound})}
		\ForAll{$n \in [1..{\text{N}}]$} \Comment{RANSAC iteration.}
		\State $\textbf{px}, \textbf{py} \gets \texttt{sample}(x_i, y_i, 2*k)$ \Comment{Initialize parameters with $2 \times k$ points.}
		\ForAll{$k' \in [0..k)$}
		\State $\alpha_k \gets (py_{2k+1}-py_{2k}) / (px_{2k+1}-px_{2k})$
		\State $\beta_k \gets py_{2k} - \alpha_k * x_{2k}$
		\State $b_k \gets 1.0$ \Comment{Assume initial avg. deviation of the Laplacion to be 1.}
		\State $d_k \gets \text{`XY'}$ \Comment{Assume $X$ $\rightarrow$ $Y$ direction first.}
		\State $\textbf{c}_k \gets P_\texttt{Laplacian}(\textbf{x}, \textbf{y}; \alpha, \beta, b, d)$ \Comment{Initial class probabilities for all samples.}
		\EndFor
		\ForAll{$l \in [1..\text{EMSteps}]$} \Comment{EM Iteration.}
		\State $\bm\alpha, \bm\beta, \textbf{b}, \textbf{d} \gets \texttt{regressLines}(\textbf{x}, \textbf{y}; \textbf{c})$ \Comment{(Weighted) median regression.}
		\State $\textbf{c} \gets P_\texttt{Laplacian}(\textbf{x}, \textbf{y}; \bm{\alpha}, \bm{\beta}, \textbf{b}, \textbf{d})$
		\EndFor
		\State $\text{modelLogL} \gets \sum_i  \text{LogP}_\texttt{Laplacian}(\textbf{x}, \textbf{y}; \bm{\alpha}, \bm{\beta}, \textbf{b}, \textbf{d})_i$ \Comment{Obtain the joint log probs.}
		\If{$\text{modelLogL} > \text{bestModelLogL}$}
			\State $\text{bestModelLogL} \gets \text{modelLogL}$
			\State $\text{bestParameters} \gets (\bm{\alpha}, \bm{\beta}, \textbf{b}, \textbf{d})$
		\EndIf
		\EndFor
		\State $\text{allLaplacian} \gets \texttt{true}$ \Comment{Check if residuals are Laplacian distributed.}
		\ForAll{$l \in [1..k]$}
		\State $(\bm{\alpha}, \bm{\beta}, \textbf{b}, \textbf{d}) \gets \text{bestParameters}$
	\State $\textbf{x}^l, \textbf{y}^l, \textbf{c}^l \gets \texttt{selectClass}(\textbf{x}, \textbf{y}, \textbf{c}, l)$ \Comment{Select points where $\text{argmax}_\textbf{c} = l$.}
	\State $\textbf{x}^l, \textbf{y}^l \gets \texttt{filter}(\textbf{x}^l, \textbf{y}^l, \textbf{c}^l, 0.4)$ \Comment{Keep points s.t. $\text{max}^{\#2} \textbf{c}^l < 0.4(1 - \text{max}^{\#1}(\textbf{c}^l)$.}
	\State $r = \texttt{LinEq}(\textbf{x}^l, \textbf{y}^l; \alpha_l, \beta_l, d_l) - \textbf{y}^l$
	\If{$\textbf{not~} \texttt{AndersonDarling}(r, 0.95)$}
	\State $\text{allLaplacian} \gets \texttt{false}$
	\State \textbf{break}
	\EndIf
	\EndFor
	\If{$\text{allLaplacian}$}
	\State \Return k
	\EndIf
	\EndFor
	\State \Return 0
	\EndProcedure
\end{algorithmic}
\label{alg:recoverMechanisms}
\end{algorithm}

\end{document}